\title{Efficient Rule Induction by Ignoring Pointless Rules}
\author{
    Andrew Cropper\textsuperscript{\rm 1,2}\equalcontrib, 
    David M. Cerna\textsuperscript{\rm 3,4}\equalcontrib 
}
\title{My Publication Title --- Single Author}
\author {
    Author Name
}
\title{My Publication Title --- Multiple Authors}
\author {
    % Authors
    First Author Name\textsuperscript{\rm 1},
    Second Author Name\textsuperscript{\rm 2},
    Third Author Name\textsuperscript{\rm 1}
}
\pgfplotsset{compat = 1.15, cycle list/Set1-8} 
\newcommand{\name}{\textsc{Reducer}}
\newcommand{\popper}{\textsc{Popper}}
\newcommand{\ale}{\textsc{Aleph}}
\newcommand{\aspsynth}{\textsc{ASPSynth}}
\theoremstyle{definition}
\newtheorem{definition}{Definition}
\newtheorem{lemma}{Lemma}
\newtheorem{corollary}{Corollary}
\newtheorem{theorem}{Theorem}
\newtheorem{proposition}{Proposition}
    \def\addlegendimage{\scriptsize\csname pgfplots@addlegendimage\endcsname}
\begin{document}

\maketitle

\begin{abstract}
The goal of inductive logic programming (ILP) is to find a set of logical rules that generalises training examples and background knowledge.
We introduce an ILP approach that identifies \emph{pointless} rules.
A rule is pointless if it contains a redundant literal or cannot discriminate against negative examples. 
We show that ignoring pointless rules allows an ILP system to soundly prune the hypothesis space.
Our experiments on multiple domains, including visual reasoning and game playing, show that our approach can reduce learning times by 99\% whilst maintaining predictive accuracies.
\end{abstract}

\begin{links}
\link{Code}{https://github.com/logicand-learning-lab/aaai26-implications}
% \link{Extended version}{https://arxiv.org/pdf/2502.01232}
\end{links}

% \subsubsection{Code, Data, and Appendices}
% A longer version of this paper with the appendices is available at https://arxiv.org/pdf/2502.01232.
% The experimental code and data are available at https://github.com/logicand-learning-lab/aaai26-implications.
\section{Introduction}

The goal of inductive logic programming (ILP) is to induce a hypothesis (a set of logical rules) that generalises training examples and background knowledge (BK) \cite{mugg:ilp,ilpintro}.

To illustrate ILP, suppose we have BK with the relations \emph{succ/2}, \emph{lt/2}, \emph{gt/1}, \emph{int/1}, \emph{even/1}, \emph{odd/1} and the following positive ($E^+$) and negative ($E^-$) examples:

\begin{center}
\begin{tabular}{l}
\emph{E$^+$ = \{f(5), f(7)\}}\\
\emph{E$^-$ = \{f(2), f(3), f(4), f(6), f(8), f(9)\}}\\
\end{tabular}
\end{center}

\noindent
Given this input, we might want to learn a rule such as:
\begin{center}
\begin{tabular}{l}
        \emph{$r_1$ = f(A) $\leftarrow$ odd(A), gt(A,3), lt(A,8)}  
\end{tabular}
\end{center}
\noindent
This rule says that \emph{f(A)} is true if $A$ is odd, greater than 3, and less than 8.

An ILP learner tests rules on the examples and BK and uses the outcome to guide the search.
For instance, suppose a learner tests the rule:
\begin{center}
\begin{tabular}{l}
        \emph{f(A) $\leftarrow$ even(A)}    
\end{tabular}
\end{center}
\noindent
This rule is too specific because it does not entail any positive example. 
Therefore, a learner can ignore its specialisations, such as:
\begin{center}
\begin{tabular}{l}
        \emph{f(A) $\leftarrow$ even(A), gt(A,2)}    
\end{tabular}
\end{center}
\noindent
Similarly, suppose a learner tests the rule:
\begin{center}
\begin{tabular}{l}
        \emph{$r_2$ = f(A) $\leftarrow$ odd(A), int(A)}
\end{tabular}
\end{center}
\noindent
This rule is too general because it entails two negative examples (\emph{f(3)} and \emph{f(9)}).
Therefore, a learner can ignore its generalisations.
A learner will not necessarily ignore its specialisations because one could still be helpful, such as:
\begin{center}
\begin{tabular}{l}
        \emph{$r_3$ = f(A) $\leftarrow$ odd(A), int(A), gt(A,3), lt(A,8)}
\end{tabular}
\end{center}
\noindent
This rule entails all the positive and none of the negative examples.

However, $r_2$ and $r_3$ are pointless because \emph{odd(A)} implies \emph{int(A)}, so we can remove \emph{int(A)} whilst preserving semantics.
We call such rules \emph{reducible} rules.

Now consider the rule:
\begin{center}
\begin{tabular}{l}
        \emph{f(A) $\leftarrow$ lt(A,B)}    
\end{tabular}
\end{center}
\noindent
This rule entails all the positive and all the negative examples, so a learner can ignore its generalisations.
However, although it entails all the negative examples, we cannot ignore its specialisations as one might still be a good rule, such as $r_1$.

Now consider the specialisation:
\begin{center}
\begin{tabular}{l}
        \emph{f(A) $\leftarrow$ lt(A,10)}    
\end{tabular}
\end{center}
\noindent
This rule entails all the positive and negative examples, so a learner can ignore its generalisations.
However, unlike the previous rule, a learner can ignore its specialisations.
The reason is that the literal \emph{lt(A,10)} implies all the negative examples, i.e. there is no negative example greater than 10.
Moreover, this literal cannot be further specialised (besides grounding the variable $A$ and thus disconnecting the head literal from the body). 
We call such rules \emph{indiscriminate} rules. 

The novelty of this paper is to show that ignoring reducible and indiscriminate rules allows us to efficiently and soundly prune the hypothesis space and thus improve learning performance.
Although existing approaches find reasons why a rule fails on examples, such as having erroneous literals \cite{mugg:metagold,musper} or rules  \cite{mis,prosynth}, they do not find reducible or indiscriminate rules.
% We expand on the novelty in Section \ref{sec:related}.

We demonstrate our idea in the new ILP system \name{}.
This system builds on \popper{} \cite{popper,maxsynth}, which can learn optimal and recursive hypotheses from noisy data. 
% We build on the LFF 
The key novelty of \name{} is identifying pointless rules and building constraints from them to prune the hypothesis space.

% To explore our idea, we use the \emph{learning from failures} (LFF) setting \cite{popper}.
% LFF frames the ILP problem as an Answer Set Programming problem (ASP) \cite{asp}, where each (stable) model to the ASP problem represents a hypothesis.
% The goal of an LFF learner is to accumulate constraints to restrict the hypothesis space.

\subsubsection*{Novelty and Contributions}
The novelty of this paper is \emph{identifying pointless rules and building constraints from them to prune the hypothesis space of an ILP system}.
The impact, which we demonstrate on multiple domains, is vastly reduced learning times.
% Moreover, as the idea connects many areas of AI, including machine learning, logic programming, and constraint satisfaction, we hope the idea interests a broad audience.
Overall, we contribute the following:

\begin{itemize}
\item We define two types of pointless rules: \emph{reducible} and \emph{indiscriminate}.
We show that specialisations of reducible rules are also reducible (Proposition \ref{prop:redu_specs}).
Likewise, we show that specialisations of indiscriminate rules are also indiscriminate (Proposition \ref{prop:indi_specs}).
Finally, we show that a hypothesis with a reducible or indiscriminate rule is not optimal (Propositions \ref{prop:sound_sat3} and \ref{prop:sound_Indiscriminate}).

\item We introduce \name{}, an ILP system which identifies pointless rules in hypotheses, including recursive hypotheses, and builds constraints from them to prune the hypothesis space.
We prove that \name{} always learns an optimal hypothesis if one exists (Theorem \ref{thm:optcorrect}).

\item We experimentally show on multiple domains, including visual reasoning and game playing, that our approach can reduce learning times by 99\% whilst maintaining high predictive accuracies.
\end{itemize}
\section{Related Work}
\label{sec:related}

\textbf{ILP.}
% Many ILP systems \cite{progol,tilde,aleph,quickfoil,probfoil} struggle to learn recursive programs \cite{ilpintro}.
% By contrast, \name{} can learn recursive programs.
Many ILP systems use bottom clauses \cite{progol,aleph} or variants \cite{xhail} to restrict the hypothesis space.
% The bottom clause of an example is the most specific clause that entails that example.
Building bottom clauses can be expensive \cite{progol}.
In the worst case, a learner needs a bottom clause for every positive example.
Bottom clause approaches struggle to learn hypotheses with recursion or predicate invention.
\name{} does not use bottom clauses.
% and does not have these limitations.

\textbf{Redundancy in ILP.}
\citet{DBLP:conf/ilp/FonsecaCSC04} define \emph{self-redundant} clauses similar to reducible rules (Definition \ref{def:breducible}) but cannot guarantee that specialisations are redundant (unlike Proposition \ref{prop:redu_specs}) and require users to provide redundancy information.
\citet{RaedtR04} check redundancy before testing but require \textit{anti-monotonic} constraints (holding for generalisations, not specialisations) and do not identify implications between literals.
By contrast, \name{} finds implications to prune specialisations.
\citet{quickfoil} prune syntactic redundancy (e.g. duplicate variables), whereas we detect semantic redundancy.
\citet{DBLP:conf/ilp/SrinivasanK05} compress bottom clauses statistically; we avoid bottom clauses.

\textbf{Rule selection.}
Many systems \cite{aspal,hexmil,prosynth,aspsynth,ilasp4} precompute every possible rule in the hypothesis space and then search for a subset that generalises the examples.
Because they precompute all possible rules, they cannot learn rules with many literals and can build pointless rules. 
For instance, \textsc{ilasp4} \cite{ilasp4} will precompute all rules with \emph{int(A)} and \emph{even(A)} in the body.
By contrast, \name{} builds constraints from pointless rules to restrict rule generation. 
If \name{} sees a rule containing \emph{int(A)} and \emph{even(A)}, it identifies that \emph{even(A)} implies \emph{int(A)} and henceforth never builds a rule with both literals.
% constrain the generation of rules

% (ii) does not address computational efficiency of  constraint checking, and 

\textbf{Constraints.}
Many recent ILP systems frame the ILP problem as a constraint satisfaction problem \cite{aspal,atom,inspire,hexmil,aspsynth}.
\textsc{MUSPer} \cite{musper} finds minimal sub-hypotheses that can never be true and builds constraints from them to prune the hypothesis space.
For instance, given the rule 
\emph{f(A) $\leftarrow$ succ(A,B),odd(B),even(B)},
\textsc{MUSPer} can identify that a number cannot be both odd and even and then prunes the hypothesis space accordingly.
By contrast, \name{} finds pointless rules with redundancy.
% , i.e. that \emph{odd(B)} and \emph{even(B)} form an unsatisfiable core, 

\textbf{Rule induction.}
ILP approaches induce rules from data, similar to rule learning methods \cite{DBLP:conf/ruleml/FurnkranzK15} such as AMIE+ \cite{DBLP:journals/vldb/GalarragaTHS15} and RDFRules \cite{rdfrules}.
Most rule-mining methods are limited to unary and binary relations, require facts as input, and operate under an open-world assumption. 
By contrast, \name{} operates under a closed-world assumption, supports arbitrary-arity relations, and can learn from definite programs as background knowledge.

\textbf{Redundancy in AI.}
Theorem proving has preprocessing techniques for eliminating redundancies \cite{HoderKKV12,KhasidashviliK16,VukmirovicBH23}. 
In SAT, redundancy elimination techniques, such as blocked clause elimination~\cite{Kullmann99}, play an integral role in modern solvers~\cite{HeuleJB10a,BarnettCB20,BiereJK21}. 
Similar to this paper, SAT redundancy elimination identifies clauses containing literals that always resolve to tautologies.
\section{Problem Setting}
We assume familiarity with logic programming \cite{lloyd:book} but have included summaries in the supplementary material.
% Appendix~\ref{sec:terminology}. 
For clarity, we define some key terms. 
A rule $r$ is a definite clause of the form $h\leftarrow p_1,\ldots, p_n$ where $h, p_1,\ldots, p_n$ are literals, $head(r) = h$, and $body(r) = \{ p_1,\ldots, p_n\}$. 
A \emph{definite program} is a set of definite clauses. We use the term \emph{hypothesis} interchangeably with a definite program.
We denote the set of variables in a literal $l$ as $vars(l)$.
The variables of a rule $r$, denoted $vars(r)$, is defined as $vars(head(r)) \cup \bigcup_{p\in body(r)} vars(p)$.
Given a rule $r$ and a set of literals $C$, by $r\cup C$ we denote the rule $r'$ such that $head(r') = head(r)$ and $body(r') =  body(r)\cup C$.
A \emph{specialisation} of a rule $r$ is a rule $r'$ that is $\theta$-subsumptively more specific than $r$, where $\theta = \{v_1/t_1, …, v_n/t_n\}$ is a substitution that simultaneously replaces each variable $v_i$ by term $t_i$ (See~\cite{plotkin:thesis} and the Appendix).
We focus on a restricted form of $\theta$-subsumption which only considers whether the bodies of two rules with the same head literal are contained in one another, i.e., no variable renaming.
We formally define this restriction through a \emph{subrule} relation:

% \ac{@DC, can you add these definitions please?}
% \ac{def rule}
% \ac{def head}
% \ac{def body}
% \ac{def vars}

% \ac{TODO def specialisation!}
% \ac{@DC, can you add some text to help motivate these definitions?}
\begin{definition}[\textbf{Subrule}]
\label{def:satruleALT}
Let $r_1$ and $r_2$ be rules. 
Then $r_1$ is a \emph{subrule} of $r_2$, denoted $r_1\subseteq r_2$, if $head(r_1) = head(r_2)$ and $body(r_1)\subseteq body(r_2)$.
\end{definition}
\noindent
We generalise the subrule relation to a \emph{sub-hypothesis} relation. 
Unlike the subrule relation, the sub-hypothesis relation is not a restriction of $\theta$-subsumption:
\begin{definition}[\textbf{Sub-hypothesis}]
\label{def:subhypoth}
Let $h_1$ and $h_2$ be hypotheses and for all $r_1\in h_1$ there exists $r_2\in h_2$ such that $r_1\subseteq r_2$.
Then $h_1$ is a \emph{sub-hypothesis} of $h_2$, denoted $h_1\subseteq h_2$. 
\end{definition}
\noindent
The sub-hypothesis relation captures a particular type of hypothesis that we refer to as \emph{basic}. 
These are hypotheses for which specific rules do not occur as part of a recursive predicate definition:
\begin{definition}[\textbf{Basic}]
\label{def:basic}
Let $h$ be a hypothesis, 
$r_1$ a rule in $h$,
and for all $r_2$ in $h$, the head symbol of $r_1$ does not occur in a body literal of $r_2$.
Then \emph{$r_1$ is basic in $h$}.
% if  for all rules $r_2$ in $h$, the head symbol of $r_1$ does not occur in a body literal of $r_2$.
\end{definition}
\noindent
As we show in Section~\ref{subsec:pointless}, under certain conditions we can prune hypotheses containing sub-hypotheses that are \emph{basic} with respect to a contained rule. 
% \ac{@DC,  can you add some text to help motivate this observation?}
% \begin{observation}
%     If $r_1$ is basic in $h_1$ then for all $r_2\subseteq r_1$, $r_2$ is basic in $h_2 = h\setminus\{r_1\}\cup\{r_2\}$.
% \end{observation}

% \begin{definition}[\textbf{Basic}]
% \label{def:basic}
% Let $B$ be BK. 
% A rule is \emph{basic} if all of its body literals are defined in $B$. 
% A hypothesis    is \emph{basic} if all of its rules are basic.
% \end{definition}

% \begin{observation}
%     Basic hypotheses do not contain recursive rules. 
% \end{observation}

\subsection{Inductive Logic Programming}
We formulate our approach in the ILP learning from entailment setting \cite{luc:book}.
We define an ILP input:

\begin{definition}[\textbf{ILP input}]
\label{def:probin}
An ILP input is a tuple $(E, B, \mathcal{H})$ where $E=(E^+,E^-)$ is a pair of sets of ground atoms denoting positive ($E^+$) and negative ($E^-$) examples, $B$ is background knowledge, and $\mathcal{H}$ is a hypothesis space, i.e., a set of possible hypotheses.
\end{definition}
\noindent
We restrict hypotheses and background knowledge to definite programs with the least Herbrand model semantics.

We define a cost function:
\begin{definition}[\textbf{Cost function}]
\label{def:cost_function}
Given an ILP input $(E, B, \mathcal{H})$, a cost function $cost_{E,B}~:~\mathcal{H}~\to~\mathbb{N}$ assigns a numerical cost to each hypothesis in $\mathcal{H}$.
\end{definition}

\noindent
 Given an ILP input and a cost function $cost_{E,B}$, we define an \emph{optimal} hypothesis:
\begin{definition}[\textbf{Optimal hypothesis}]
\label{def:opthyp}
Given an ILP input $(E, B, \mathcal{H})$ and a cost function \emph{cost$_{E,B}$}, a hypothesis $h \in \mathcal{H}$ is \emph{optimal} with respect to \emph{cost$_{E,B}$} when $\forall h' \in \mathcal{H}$, \emph{cost$_{E,B}$}($h$) $\leq$ \emph{cost$_{E,B}$}($h'$).
\end{definition}

% \noindent
% We use a cost function that first minimises misclassified training examples, then minimises the number of literals in a hypothesis.
% Given a hypothesis $h$, a true positive is a positive example entailed by $h \cup B$, and a true negative is a negative example not entailed by $h \cup B$. 
% A false positive is a negative example entailed by $h \cup B$, and a false negative is a positive example not entailed by $h \cup B$. 
% We denote the number of false positives and false negatives of $h$ as $fp_{E,B}(h)$ and $fn_{E,B}(h)$ respectively. 
% We define $size: \cal{H} \rightarrow {\mathbb{N}}$ as the number of literals in $h \in {\cal{H}}$. 
% Formally, we use the cost function $cost_{E,B}(h) = (fp_{E,B}(h) + fn_{E,B}(h), size(h))$.
% % \begin{align*}

\noindent
We use a cost function that first minimises misclassified training examples, then minimises the number of literals in a hypothesis.
False positives are negative examples entailed by $h \cup B$.
False negatives are positive examples not entailed by $h \cup B$.
We denote these as $fp_{E,B}(h)$ and $fn_{E,B}(h)$ respectively.
We define $size: \cal{H} \rightarrow {\mathbb{N}}$ as the number of literals in $h \in {\cal{H}}$. 
We use the cost function $cost_{E,B}(h) = (fp_{E,B}(h) + fn_{E,B}(h), size(h))$.

% \end{align*}

\subsection{Pointless Rules}
\label{subsec:pointless}
We want to find rules that cannot be in an optimal hypothesis. We focus on \emph{reducible} and \emph{indiscriminate} rules.

% \ac{We assume that the head literal of a rule does not appear in the head of a rule in the BK.???}

% In both cases, certain rules contained in a given hypothesis may be pointless by definition, though may still be part of optimal solutions due to structural properties of the hypothesis, i.e. the rule is part of a \emph{recursive} definition. Thus, we assume the following property holds for all \emph{pointless} rules within a given hypothesis:
% \begin{assumption}
% \label{ass:one}
% If a rule $r_1$ is \emph{pointless} in hypothesis $h$, then for all $r_2\in h$, the head symbol of $r_1$ does not occur in a body literal of $r_2$.
% \end{assumption}
% NOTE: maybe we just need to identify what it means to be a recursive rule (recursive or part of recursive definition). and we say pointless rules cannot by that.

A \textit{reducible} rule contains a body literal that is implied by other body literals.
For example, consider the rules:
\begin{center}
\begin{tabular}{l}
\emph{r$_1$  = h $\leftarrow$ odd(A), int(A)}\\
\emph{r$_2$  = h $\leftarrow$ odd(A)}
\end{tabular}
\end{center}
\noindent
% Assuming that \emph{odd} implies \emph{int},
The rule \emph{r$_1$} is reducible because \emph{odd(A)}  implies \emph{int(A)}.
Therefore, \emph{r$_1$} is logically equivalent to \emph{r$_2$}.

As a second example, consider the rule: 
\begin{center}
\begin{tabular}{l}
\emph{h $\leftarrow$ gt(A,B), gt(B,C), gt(A,C)}\\
\end{tabular}
\end{center}
\noindent
This rule is reducible because the relation \emph{gt/2} is transitive, i.e. \emph{gt(A,B)} and \emph{gt(B,C)} imply \emph{gt(A,C)}.

Because it contains a redundant literal, a reducible rule cannot be in an optimal hypothesis.
However, a specialisation of a reducible rule could be in an optimal hypothesis.
For instance, consider the rule:
\begin{center}
\begin{tabular}{l}
\emph{$r_1$ = h $\leftarrow$ member(L,X), member(L,Y)}\\
\end{tabular}
\end{center}
\noindent
In this rule, \emph{member(L,X)} implies \emph{member(L,Y)} and vice-versa, so one of the literals is redundant.
However, we could still specialise this rule as:
\begin{center}
\begin{tabular}{l}
\emph{$r_2$ = h $\leftarrow$ member(L,X), member(L,Y), gt(X,Y)}\\
\end{tabular}
\end{center}

% \ac{@AC, FIX MEMBER}

\noindent
Rules $r_1$ and $r_2$ are not logically equivalent, and $r_2$ could be in an optimal hypothesis.

A key contribution of this paper is to identify reducible rules where we can prune all their specialisations.
The idea is to identify a redundant \emph{captured} literal.
A captured literal is one where all of its variables appear elsewhere in the rule.
% and the literal is implied by other literals.
For instance, consider the rule:
\begin{center}
\begin{tabular}{l}
\emph{h $\leftarrow$ succ(A,B), succ(B,C), gt(C,A), gt(C,D)}\\
\end{tabular}
\end{center}
\noindent In this rule, the literal \emph{gt(C,A)} is captured because all its variables appear elsewhere in the rule.
By contrast, the literal \emph{gt(C,D)} is not captured because the variable \emph{D} does not appear elsewhere in the rule.

% \noindent
We define a captured literal:
\begin{definition}[\textbf{Captured literal}]
\label{def:capturedLiteral}
Let $r$ be a rule,
$l \in body(r)$, 
and
$vars(l) \subseteq vars( body(r) \setminus \{l\}) \cup vars(head(r))$. 
Then $l$ is $r$-\emph{captured}. 
\end{definition}

\noindent
If a literal is captured in a rule then it is captured in its specialisations:
% % \dc{\noindent Consider the following extension of $r_3$:
% \begin{center}
% \begin{tabular}{l}
% \emph{$r_4$ = h $\leftarrow$ succ(A,B), succ(B,C), gt(C,A), gt(C,D),gt(D,E)}\\
% \end{tabular}
% \end{center}
% In $r_4$ the literal $gt(C,A)$ is captured. 
% We prove this claim:
\begin{lemma}
\label{prop:capTrans}
% Let $r_1$ and $r_2$ be rules,
Let $r_1$ and $r_2$ be rules such that 
$r_2\subseteq r_1$,
$l\in body(r_2)$,
and $l$ be $r_2$-captured.
Then $l$ is $r_1$-captured.
% Then if $l$ is $r_2$-captured, then $l$ is  $r_1$-captured.
\end{lemma}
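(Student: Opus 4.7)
The plan is to unfold the two relevant definitions and observe that capturedness is monotone in the body of the rule. The subrule relation $r_2 \subseteq r_1$ gives $head(r_2) = head(r_1)$ and $body(r_2) \subseteq body(r_1)$, so removing the single literal $l$ from each side preserves the inclusion: $body(r_2) \setminus \{l\} \subseteq body(r_1) \setminus \{l\}$. Taking variables preserves subset inclusion, yielding $vars(body(r_2) \setminus \{l\}) \subseteq vars(body(r_1) \setminus \{l\})$.

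Next, I would combine this with the head equality. Since $head(r_1) = head(r_2)$, we have
\[
vars(body(r_2) \setminus \{l\}) \cup vars(head(r_2)) \;\subseteq\; vars(body(r_1) \setminus \{l\}) \cup vars(head(r_1)).
\]
By hypothesis, $vars(l)$ is contained in the left-hand set, so by transitivity of $\subseteq$ it is contained in the right-hand set. Together with the fact that $l \in body(r_2) \subseteq body(r_1)$, this is exactly the definition of $l$ being $r_1$-captured.

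There is essentially no obstacle here: the lemma is a direct monotonicity argument, and the only thing to be careful about is that we remove the \emph{same} literal $l$ from both bodies (which is valid because $l \in body(r_2)$ and $body(r_2) \subseteq body(r_1)$, so $l$ lies in both). No case analysis on the structure of $l$ or on substitutions is needed, since the restricted form of $\theta$-subsumption used here (Definition~\ref{def:satruleALT}) forbids variable renaming, so variable occurrences are literally preserved when passing from $r_2$ to $r_1$.
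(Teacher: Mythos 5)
Your proposal is correct and is essentially the paper's argument spelled out in full: the paper's proof is a one-liner noting that the subrule relation preserves variable occurrence, and your unfolding of Definitions~\ref{def:satruleALT} and~\ref{def:capturedLiteral} into the monotonicity chain $vars(body(r_2)\setminus\{l\})\cup vars(head(r_2)) \subseteq vars(body(r_1)\setminus\{l\})\cup vars(head(r_1))$ is exactly the detail that one-liner elides. No gap; the only additional care you take (checking $l \in body(r_1)$ so the definition applies) is a worthwhile explicit step.
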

\begin{proof}
Follows from Definition~\ref{def:satruleALT} as the subrule relation preserves variable occurrence. 
% Since $body(r_2)\subseteq body(r_1)$ then for any  $l\in body(r_2)$,  $vars(body(r_2)\setminus \{l\})\subseteq vars(body(r_1)\setminus \{l\})$ and $vars(head(r_2)) =vars(head(r_1))$. 
% Thus, if $vars(l) \subseteq vars( body(r_2) \setminus \{l\})\cup vars(head(r_2))$ then  $vars(l) \subseteq vars( body(r_1) \setminus \{l\})\cup vars(head(r_1))$.
\end{proof}

\noindent
We define a reducible rule:

% \begin{definition}[\textbf{Reducible}]
% \label{def:breducible}
% Let $r$ be a rule, 
% $B$ be BK,
% $l \in body(r)$ be $r$-captured,
% and $B\models r\leftrightarrow r \setminus \{l\}$.
% Then  $r$ is \emph{reducible}.
% \end{definition}

\begin{definition}[\textbf{Reducible}]
\label{def:breducible}
Let $r$ be a rule, 
$B$ be BK,
$l \in body(r)$ be $r$-captured,
and $B\models (body(r) \setminus \{l\})\rightarrow l$.
Then  $r$ is \emph{reducible}.
\end{definition}
% \begin{definition}[\textbf{$P$-Reducible}]
% \label{def:breducible}
% Let $r$ be a rule,
% $P$ a set of ground positive literals,
% $B$ be BK, and
% $l \in body(r)$ be $r$-captured.
% Then  $r$ is \emph{$P$-reducible} if 
% $$B\models \left( r\rightarrow \bigwedge_{e\in P} e \right)\leftrightarrow \left(r \setminus \{l\}\rightarrow \bigwedge_{e\in P} e \right).$$
% \end{definition}

\noindent Some specialisations of a reducible rule are reducible:

\begin{proposition}[\textbf{Reducible specialisations}] \label{prop:redu_specs}
Let $B$ be BK, 
$r_1$ be a reducible rule, 
% $r_2$ be a rule, 
and $r_1\subseteq r_2$. 
Then $r_2$ is reducible.
\end{proposition}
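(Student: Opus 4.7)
The plan is to use the same witness literal $l$ that certifies reducibility of $r_1$ to certify reducibility of $r_2$. By Definition~\ref{def:breducible}, there exists $l\in body(r_1)$ such that $l$ is $r_1$-captured and $B \models (body(r_1)\setminus\{l\})\rightarrow l$. I will verify that this same $l$ satisfies all three conditions of Definition~\ref{def:breducible} with respect to $r_2$: (i) $l\in body(r_2)$, (ii) $l$ is $r_2$-captured, and (iii) $B \models (body(r_2)\setminus\{l\})\rightarrow l$.

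Condition (i) is immediate from the hypothesis $r_1\subseteq r_2$, which by Definition~\ref{def:satruleALT} gives $body(r_1)\subseteq body(r_2)$, so $l\in body(r_2)$. Condition (ii) is where I invoke Lemma~\ref{prop:capTrans}. Notice that the lemma is stated so that capturedness is preserved from the subrule to the containing rule: if $l$ is captured in the smaller rule, it remains captured in the larger one. In our setting $r_1$ is the subrule and $r_2$ is the containing rule, so applying the lemma with its internal $r_2$ instantiated by our $r_1$ and its internal $r_1$ instantiated by our $r_2$ yields that $l$ is $r_2$-captured. I would phrase this explicitly to avoid confusion about the naming collision.

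Condition (iii) follows from monotonicity of classical entailment. Since $body(r_1)\setminus\{l\} \subseteq body(r_2)\setminus\{l\}$, any model of $B$ together with $body(r_2)\setminus\{l\}$ is in particular a model of $B$ together with $body(r_1)\setminus\{l\}$, and hence satisfies $l$ by the assumed entailment. Therefore $B \models (body(r_2)\setminus\{l\})\rightarrow l$, and $r_2$ meets all three conditions of Definition~\ref{def:breducible}.

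There is no genuine obstacle in this argument; it is essentially bookkeeping through definitions. The only delicate point is the notational clash with Lemma~\ref{prop:capTrans}, where the direction of subrule containment is the opposite of what one might expect at first glance, so I would draw explicit attention to the variable renaming when citing the lemma to keep the reader oriented.
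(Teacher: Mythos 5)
Your proposal is correct and follows essentially the same route as the paper's proof: reuse the witness literal $l$, apply Lemma~\ref{prop:capTrans} to transfer capturedness to $r_2$, and conclude $B \models (body(r_2)\setminus\{l\})\rightarrow l$ by monotonicity (the paper phrases this last step as the implication for $r_1$ being subsumptively more general than that for $r_2=r_1\cup C$). Your explicit remark about the reversed naming convention in Lemma~\ref{prop:capTrans} is a welcome clarification but does not change the argument.
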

% \begin{proof}
%  Let $l$ be a $r_1$-captured literal and $B\models r_1\leftrightarrow (r_1 \setminus \{l\})$. 
%  By Lemma~\ref{prop:capTrans}, $l$ is also $r_2$-captured. 
%  Let $C= body(r_2)\setminus body(r_1)$. 
% Then 
% $B\models (r_1\cup C)\leftrightarrow ((r_1\cup C) \setminus \{l\}).$
% Finally, since $r_2 = (r_1\cup C)$, we deduce  $B\models r_2\leftrightarrow (r_2 \setminus \{l\})$.
% \end{proof}

\begin{proof}
 Let $l$ be a $r_1$-captured literal and $B\models (body(r_1) \setminus \{l\})\rightarrow l$. 
 By Lemma~\ref{prop:capTrans}, $l$ is also $r_2$-captured. 
 Let $C= body(r_2)\setminus body(r_1)$. 
Then, $(body(r_1) \setminus \{l\})\rightarrow l$ is subsumptively more general than $(body(r_1\cup C) \setminus \{l\})\rightarrow l$, where $r_2=r_1\cup C$, i.e., if the former holds for $r_1$ than the latter holds for $r_2$. Thus, $r_2$ is reducible.
\end{proof}

% \dc{\begin{proposition}[\textbf{Reducible generalisation}] \label{prop:redu_gens}
% Let $B$ be BK, 
% $h_1$ and $h_2$ be hypotheses such that $h_1\subseteq h_2$,
% $r\in h_1$ be a reducible rule.
% Then $r$ is reducible in $h_2$.
% \end{proposition}}

\noindent 
Certain hypotheses that contain a sub-hypothesis with reducible rules are not optimal:
% \begin{proposition}[\textbf{Reducible soundness}] \label{prop:sound_sat2}
% Let $B$ be BK, 
% $h$ be a hypothesis with a reducible rule with respect to $B$.
% Then $h$ is not optimal.
% \end{proposition}

% \begin{proof}
% For any hypothesis $h$ with a reducible rule $r_1$, there exists  $h' = (h\setminus \{r_1\})\cup\{r_2\}$ such that (i) $B\models r_1\leftrightarrow r_2$ and (ii) $|r_2|< |r_1|$. It follows that $ cost_{E,B}(h') <  cost_{E,B}(h)$.
% \end{proof}

% \dc{
\begin{proposition}[\textbf{Reducible soundness}] \label{prop:sound_sat3}
Let $B$ be BK,
$h_1$ be a hypothesis, 
% $h_2$ be a hypothesis, 
$h_2\subseteq h_1$, 
$r_1$ be basic rule in $h_1$, 
$r_2\in h_2$,
$r_2\subseteq r_1$,
and $r_2$ be reducible with respect to $B$.
Then $h_1$ is not optimal. 
\end{proposition}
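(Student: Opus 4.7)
The plan is to exhibit a hypothesis $h_1'$ that has the same classification behaviour as $h_1$ on every example but strictly fewer literals, which by the lexicographic cost function contradicts optimality of $h_1$.

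First, I would apply Proposition~\ref{prop:redu_specs} to lift reducibility from $r_2$ to $r_1$: since $r_2$ is reducible with respect to $B$ and $r_2 \subseteq r_1$, the rule $r_1$ is itself reducible. Unfolding Definition~\ref{def:breducible}, there exists a literal $l \in body(r_1)$ that is $r_1$-captured and satisfies $B \models (body(r_1) \setminus \{l\}) \to l$. I would then let $r_1'$ be the rule with $head(r_1') = head(r_1)$ and $body(r_1') = body(r_1) \setminus \{l\}$, and set $h_1' = (h_1 \setminus \{r_1\}) \cup \{r_1'\}$, so by construction $size(h_1') = size(h_1) - 1$.

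The core of the argument is to show that $h_1 \cup B$ and $h_1' \cup B$ have the same least Herbrand model, which then gives $fp_{E,B}(h_1') = fp_{E,B}(h_1)$ and $fn_{E,B}(h_1') = fn_{E,B}(h_1)$. Two ingredients are needed. (i) Because $l$ is $r_1$-captured, $vars(l) \subseteq vars(r_1')$, so every ground substitution $\theta$ of $vars(r_1')$ already grounds $l$; together with $B \models (body(r_1)\setminus\{l\}) \to l$, which carries over to the least Herbrand model of $h_1 \cup B$ since every model of $h_1 \cup B$ is a model of $B$, this yields that $r_1$ and $r_1'$ fire on exactly the same ground substitutions and therefore derive exactly the same head atoms. (ii) Because $r_1$ is basic in $h_1$ (Definition~\ref{def:basic}), the head predicate of $r_1$ does not occur in the body of any rule of $h_1$, so replacing $r_1$ by $r_1'$ cannot cascade into the derivations of other rules; the only rule whose conclusions might change is $r_1$ itself, and by (i) they do not.

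The main obstacle I expect is (i): carefully verifying that the capture condition is precisely the syntactic hook that upgrades the background implication into per-rule semantic equivalence, and that the implication is preserved when passing from $B$ to $h_1 \cup B$. The \emph{basic} assumption in (ii) is essential because in a recursive setting a locally equivalent rule swap can still change the fixpoint, and restricting attention to basic rules sidesteps this complication. Once both ingredients are in place, the lexicographic cost function gives $cost_{E,B}(h_1') < cost_{E,B}(h_1)$, so $h_1$ is not optimal.
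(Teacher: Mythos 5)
Your proposal is correct and follows essentially the same route as the paper: lift reducibility from $r_2$ to $r_1$ via Proposition~\ref{prop:redu_specs}, drop the captured redundant literal $l$ to form a strictly smaller rule, substitute it into $h_1$, and conclude via the lexicographic cost function. The only difference is that you spell out the semantic-equivalence step (why the captured condition plus the background implication preserves the derived atoms, and why basicness prevents cascading through other rules), which the paper's proof asserts without elaboration.
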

% }
% \begin{proof} By Proposition~\ref{prop:redu_specs}, $r_1$ is also reducible implying that there exists a basic rule $r_3\subseteq r_1$ such that (i)  $B\models r_1 \leftrightarrow r_3$ and (ii) $|r_3|< |r_1|$. 
% Let $h_3 = (h_1\setminus \{r_1\})\cup\{r_3\}$. Then $cost_{E,B}(h_3) < cost_{E,B}(h_1)$, i.e. $h_1$ is not optimal.
% \end{proof}

\begin{proof} By Proposition~\ref{prop:redu_specs}, $r_1$ is also reducible implying that there exists an $l\in body(r_1)$ and rule $r_3\subset r_1$ such that (i)  $B\models (body(r_1)\setminus \{l\} )\rightarrow l$, (ii) $r_1=r_3\cup\{l\}$, and (iii) $|r_3|< |r_1|$. Furthermore, $r_3$ is basic given that $r_1$ is basic. Let $h_3 = (h_1\setminus \{r_1\})\cup\{r_3\}$. Then $cost_{E,B}(h_3) < cost_{E,B}(h_1)$, i.e. $h_1$ is not optimal.
\end{proof}
% \dc{If we remove the restriction that $r_2$ is basic than $r_2$ may be recursive. In such cases, the captured literal denoted not only the literal in the rule but also occurrences within recursive calls. It is not necessarily the case that the literal is reducible in all recursive occurrences.}
% \dc{\noindent Observe that Proposition~\ref{prop:sound_sat2} implies that adding rules to a hypothesis (including recursive rules) containing reducible rules cannot result in an optimal hypothesis. This is not the case for the pruning mechanism we introduce next.}

% \dc{\noindent We can prune pointless rules that \emph{implication reducible} fails to prune by focusing on the subset of $E^-$ entailed by a rule, so called \emph{indiscriminate rules}. 

% \noindent
% We now define \emph{indiscriminate} rule.
% To motivate them, consider the following rules:
% \begin{center}
% \begin{tabular}{l}
% \emph{$r_1$ = h(A) $\leftarrow$ odd(A)}\\
% \emph{$r_2$ = h(A) $\leftarrow$ odd(A), prime(A)}
% \end{tabular}
% \end{center}
% \noindent
% Rule \emph{r$_1$} is not reducible because \emph{odd(A)} does not imply \emph{prime(A)}, nor does \emph{prime(A)} imply \emph{odd(A)}\footnote{
% A number is prime if it is greater than 1 and cannot be written as the product of two smaller natural numbers
% \url{https://en.wikipedia.org/wiki/Prime_number#Definition_and_example}.}.
\noindent
This proposition implies that if we find a reducible rule, we can ignore hypotheses that include this rule or its specialisations.

% \noindent
We introduce \emph{indiscriminate} rules, a weakening of reducible rules. To motivate them, consider the rule:
\begin{center}
\begin{tabular}{l}
% \emph{$r_1$ = f(A) $\leftarrow$ odd(A)}\\
% \emph{$r_2$ = f(A) $\leftarrow$ lt(A,10)}\\
% \emph{$r_1$ = f(A) $\leftarrow$ odd(A), lt(A,10)}
\emph{f(A) $\leftarrow$ odd(A), lt(A,10)}
\end{tabular}
\end{center}

\noindent
This rule is not reducible because \emph{odd(A)} does not imply 
\emph{lt(A,10)}, nor does \emph{lt(A,10)} imply \emph{odd(A)}.
However, suppose we have the negative examples $E^- = \{f(1), f(2), f(3)\}$.
For these examples, the literal \emph{lt(A,10)} implies all the negative examples.
In other words, there is no negative example greater than 10.
Therefore, this literal (and thus this rule) is pointless because it cannot discriminate against the negative examples.
We formalise this notion of an indiscriminate rule:
\begin{definition}[\textbf{Indiscriminate}]
\label{def:indiscriminate}
Let $r$ be a rule, 
$B$ be BK,
$E^-$ be negative examples with the same predicate symbol as the head of $r$,
$l \in body(r)$ be $r$-captured,
and for all $e\in E^-$, $B\models (body(r)\setminus \{l\})\theta_e \rightarrow l\theta_e^r$ where $\theta_e^r$ is a substitution with domain $vars(head(r))$ such that $head(r)\theta_e^r = e$.
Then  $r$ is \emph{indiscriminate}.
\end{definition}
% }
\noindent Under certain conditions, specialisations of an indiscriminate rule are indiscriminate:

\begin{proposition}[\textbf{Indiscriminate specialisations}] \label{prop:indi_specs}
Let $B$ be BK, 
$r_1$ be an indiscriminate rule, 
% $r_2$ be a rule such that 
$r_1\subseteq r_2$, 
and  $E^-$ be negative examples with the same predicate symbol as the head of $r_1$. 
Then $r_2$ is indiscriminate.
\end{proposition}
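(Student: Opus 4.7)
The plan is to mirror the proof of Proposition \ref{prop:redu_specs} almost step for step, since Definition \ref{def:indiscriminate} differs from Definition \ref{def:breducible} only in that the implication is required to hold after grounding the head variables by each negative example rather than universally. The two ingredients we need are that the captured literal of $r_1$ remains captured in $r_2$, and that the relevant implications lift from $r_1$ to $r_2$ under the same substitutions.

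First, I would unpack indiscriminateness of $r_1$: fix a body literal $l$ that is $r_1$-captured and, for every $e \in E^-$, fix the substitution $\theta_e^{r_1}$ with domain $vars(head(r_1))$ such that $head(r_1)\theta_e^{r_1} = e$, satisfying $B \models (body(r_1)\setminus\{l\})\theta_e^{r_1} \rightarrow l\theta_e^{r_1}$. Because $r_1 \subseteq r_2$, Definition \ref{def:satruleALT} gives $head(r_1) = head(r_2)$, so $\theta_e^{r_1} = \theta_e^{r_2}$ and $l\theta_e^{r_1} = l\theta_e^{r_2}$. Lemma \ref{prop:capTrans} then guarantees that $l$ is $r_2$-captured.

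Next, set $C = body(r_2)\setminus body(r_1)$, so $body(r_2)\setminus\{l\} = (body(r_1)\setminus\{l\}) \cup C$. For each $e \in E^-$, applying $\theta_e^{r_2}$ to both sides preserves the inclusion, so the antecedent $(body(r_2)\setminus\{l\})\theta_e^{r_2}$ is logically stronger than $(body(r_1)\setminus\{l\})\theta_e^{r_2}$. Monotonicity of entailment (strengthening the antecedent preserves implication) then yields $B \models (body(r_2)\setminus\{l\})\theta_e^{r_2} \rightarrow l\theta_e^{r_2}$ for every $e \in E^-$. Combining this with $l$ being $r_2$-captured satisfies Definition \ref{def:indiscriminate} for $r_2$.

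The main obstacle, and the reason the proof is not entirely cosmetic relative to Proposition \ref{prop:redu_specs}, is handling the per-example substitution bookkeeping: I must check that $\theta_e^{r_1}$ and $\theta_e^{r_2}$ really agree (which hinges on $head(r_1) = head(r_2)$) and that applying them to literals in $C$ (which may carry variables absent from $head(r_2)$) does not disturb the monotonicity argument. Since $\theta_e^{r}$ has domain confined to $vars(head(r))$, literals in $C$ are at worst partially instantiated, and standard monotonicity of $\models$ under strengthening of the antecedent still applies, so no new subtlety arises.
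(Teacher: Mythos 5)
Your proposal is correct and follows essentially the same route as the paper's proof: fix the captured literal $l$, use Lemma~\ref{prop:capTrans} to carry capturedness to $r_2$, set $C = body(r_2)\setminus body(r_1)$, and lift each per-example implication by monotonicity of entailment under a strengthened antecedent (what the paper phrases as the $r_1$-implication being ``subsumptively more general'' than the $r_2$-implication). Your explicit observation that $\theta_e^{r_1}=\theta_e^{r_2}$ because $head(r_1)=head(r_2)$ is a detail the paper uses silently, so it is a welcome clarification rather than a divergence.
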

% \begin{proof}
%  Let $l$ be an $r_1$-captured literal and for all $e\in E^-$, $B\models (r_1\rightarrow e)\leftrightarrow (r_1 \setminus \{l\}\rightarrow e)$. 
%  By Lemma~\ref{prop:capTrans}, $l$ is also $r_2$-captured. 
%  Let $C= body(r_2)\setminus body(r_1)$. 
%  We deduce the following for all $e\in E^-$: $$B\models eq(r_1,e)  \rightarrow eq(r_1\cup C,e)$$ where $eq(r',e) \equiv (r'\rightarrow e)\leftrightarrow (r' \setminus \{l\}\rightarrow e).$ 
%  Since $r_2 = (r_1\cup C)$ then for all $e\in E^-$, $B\models (r_2\rightarrow e)\leftrightarrow (r_2 \setminus \{l\}\rightarrow e)$.
% \end{proof}

\begin{proof}
 Let $l$ be an $r_1$-captured literal and for all $e\in E^-$, $B\models (body(r_1)\setminus \{l\})\theta_e^{r_1} \rightarrow l\theta_e^{r_1}$. 
 By Lemma~\ref{prop:capTrans}, $l$ is also $r_2$-captured. 
 Let $C= body(r_2)\setminus body(r_1)$. 
 We deduce the following for all $e\in E^-$: $(body(r_1)\setminus \{l\})\theta_e^{r_1} \rightarrow l\theta_e^{r_1}$ is subsumptively more general than $(body(r_1\cup C)\setminus \{l\})\theta_e^{r_2} \rightarrow l\theta_e^{r_2}$ where $r_2 = (r_1\cup C)$, i.e., if the former holds for $r_1$, than the latter holds for $r_2$. Thus, $r_2$ is indiscriminate.
\end{proof}

% \dc{\begin{proposition}[\textbf{Indiscriminate generalisations}] \label{prop:indi_specs}
% Let $B$ be BK, 
% $h_1$ and $h_2$ be hypotheses such that $h_1\subseteq h_2$,
% $r\in h_1$ be a indiscriminate rule.
% Then $r$ is indiscriminate in $h_2$.  \dc{need to fix this}
% \end{proposition}
% }

% \ac{need text here to intorduce the next proposition}
\noindent
As with reducible rules, some hypotheses with an indiscriminate rule are not optimal:

% \dc{/
\begin{proposition}[\textbf{Indiscriminate soundness}] \label{prop:sound_Indiscriminate}
Let $B$ be BK,
$E^-$ be negative examples,
$h_1$ be a hypothesis, 
$r_1$ be a basic rule in $h_1$, 
% $h_2$ be a hypothesis such that 
$h_2\subseteq h_1$, 
$r_2\in h_2$,
$r_2\subseteq r_1$,
and $r_2$ be indiscriminate with respect to $B$ and $E^-$.
Then $h_1$ is not optimal. 

% Let $(E, B, \mathcal{H})$ be ILP input,  $h\in \mathcal{H}$ be a hypothesis with an indiscriminate rule with respect to $B$ and $E^-$,
%  and $h\subseteq h'$.
%  Then $h'$ is not optimal.
\end{proposition}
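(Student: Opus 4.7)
The plan is to follow the same strategy as the proof of Proposition~\ref{prop:sound_sat3}, replacing the reducible-rule argument with its indiscriminate counterpart. First, I would apply Proposition~\ref{prop:indi_specs} to lift indiscriminability from $r_2$ to $r_1$: there exists an $r_1$-captured literal $l \in body(r_1)$ such that for every $e \in E^-$, $B \models (body(r_1) \setminus \{l\})\theta_e^{r_1} \rightarrow l\theta_e^{r_1}$. Define $r_3$ by $head(r_3) = head(r_1)$ and $body(r_3) = body(r_1) \setminus \{l\}$, so $r_3 \subset r_1$ and $|r_3| < |r_1|$. Set $h_3 = (h_1 \setminus \{r_1\}) \cup \{r_3\}$; since $r_3$ shares the head symbol of $r_1$, $r_3$ remains basic in $h_3$.

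Next, I would show $cost_{E,B}(h_3) < cost_{E,B}(h_1)$ lexicographically. Because $r_3$ is a strict generalisation of $r_1$ and every other rule is unchanged, $h_3 \cup B$ derives a superset of the atoms derived by $h_1 \cup B$, giving $fn_{E,B}(h_3) \leq fn_{E,B}(h_1)$. For the false positives, the key claim is that for each $e \in E^-$, $r_3$ fires for $e$ in $h_3 \cup B$ if and only if $r_1$ fires for $e$ in $h_1 \cup B$: the forward direction is immediate from $r_3 \subset r_1$, and for the reverse, any witnessing grounding $\sigma$ of the body-only variables of $(body(r_1) \setminus \{l\})\theta_e^{r_1}$ satisfies $l\theta_e^{r_1}\sigma$ by the indiscriminate condition, giving a witness for the full body of $r_1$. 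Hence $fp_{E,B}(h_3) \leq fp_{E,B}(h_1)$, and combined with $size(h_3) < size(h_1)$ the lexicographic cost strictly decreases, contradicting the optimality of $h_1$.

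The main obstacle is legitimising the firing-equivalence on negative examples, since the indiscriminate condition is phrased purely in terms of $B$ whereas rules are evaluated in $h \cup B$. Two facts carry the argument across: every model of $h_i \cup B$ is a model of $B$, so the implication lifts from $B$ to the full hypothesis; and $r_1$ being basic in $h_1$ ensures that no rule has the head symbol of $r_1$ in its body, so the predicates appearing in $body(r_1)$ are derived identically in $h_1 \cup B$ and $h_3 \cup B$. Together these facts allow a single witnessing $\sigma$ to serve on both sides of the equivalence. The remainder is a direct repackaging of the reducible case.
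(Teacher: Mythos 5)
Your proposal follows the same route as the paper's proof: lift indiscriminability from $r_2$ to $r_1$ via Proposition~\ref{prop:indi_specs}, drop the captured literal $l$ to obtain $r_3\subset r_1$ (which remains basic), form $h_3=(h_1\setminus\{r_1\})\cup\{r_3\}$, and conclude $cost_{E,B}(h_3)<cost_{E,B}(h_1)$. The only difference is that you spell out the false-positive/false-negative comparison (including why basicness prevents feedback through other rules' bodies), a step the paper asserts without elaboration; your justification is correct.
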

% }
% \begin{proof}
% % Let $h_1$ be a hypothesis with an indiscriminate rule $r_1$ and $h_2$ a hypothesis such that $h_1\subseteq h_2$, $r_2\in h_2$ and $r_1\subseteq r_2$. 
% By Proposition~\ref{prop:indi_specs}, $r_1$ is also reducible implying that there exists a basic rule $r_3\subseteq r_1$  such that (i) for all $e\in E^-$, $B\models (r_1\rightarrow e)\leftrightarrow (r_3\rightarrow e)$ and (ii) $|r_3|< |r_1|$.
% Let $h_3 = (h_1\setminus \{r_1\})\cup\{r_3\}$.
% Then $cost_{E,B}(h_3) < cost_{E,B}(h_1)$, i.e. $h_1$ is not optimal.
% \end{proof}

\begin{proof}
% Let $h_1$ be a hypothesis with an indiscriminate rule $r_1$ and $h_2$ a hypothesis such that $h_1\subseteq h_2$, $r_2\in h_2$ and $r_1\subseteq r_2$. 
By Proposition~\ref{prop:indi_specs}, $r_1$ is also indiscriminate implying that there exists  $l\in body(r_1)$ and $r_3\subset r_1$  such that  (i) for all $e\in E^-$, $B\models (body(r_1)\setminus \{l\})\theta_e^{r_1} \rightarrow l\theta_e^{r_1}$, (ii) $r_1=r_3\cup \{l\}$, and (iii) $|r_3|< |r_1|$. Furthermore, $r_3$ is basic given that $r_1$ is basic. Let $h_3 = (h_1\setminus \{r_1\})\cup\{r_3\}$.
Then $cost_{E,B}(h_3) < cost_{E,B}(h_1)$, i.e. $h_1$ is not optimal.
\end{proof}
% \dc{Both Proposition~\ref{prop:sound_sat3}\ \&~\ref{prop:sound_Indiscriminate} are more restrictive than necessary, in particular we can drop the restriction that $h_1$ and $h_2$ are \dc{basic}, i.e. recursive rules are allowed, and add the following restriction: every rule $r_2\in h_2$ where $r_1 \subseteq r_2$ is basic. However, proving this is technical and does little to aid our presentation of the concepts. For proofs of the generalized propositions see the Appendix. }
% \dc{Now we might want an example why it does not work for all hypotheses. My assumption is that the rule is evaluate only on input which it can discriminate. Does this mean it does not work for recursive specializations?}

% \ac{need text here to intorduce the next def}
\noindent 
A hypothesis  with a reducible or indiscriminate rule is not optimal:
\begin{definition}[\textbf{Pointless}]
\label{def:pointless}
Let $(E, B, \mathcal{H})$ be ILP input. 
A hypothesis $h\in \mathcal{H}$ is \emph{pointless} if there exists $r\in h$ such that $r$ is reducible with respect to $B$ or indiscriminate with respect to $B$ and $E^-$.
% \begin{itemize}
%     \item reducible with respect to $B$ or
%     \item indiscriminate with respect to $B$ and $E^-$
% \end{itemize}
\end{definition}

\begin{corollary}
\label{col:1}
A pointless hypothesis is not optimal.
\end{corollary}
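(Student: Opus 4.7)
The plan is to derive the corollary as an immediate consequence of the two soundness propositions (Propositions \ref{prop:sound_sat3} and \ref{prop:sound_Indiscriminate}). By Definition \ref{def:pointless}, a pointless hypothesis $h$ contains some rule $r$ that is either reducible with respect to $B$ or indiscriminate with respect to $B$ and $E^-$. I would split on these two cases and instantiate the appropriate soundness proposition using a minimal witness sub-hypothesis.

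For the instantiation, I would take $h_1 = h$, $h_2 = \{r\}$, and $r_1 = r_2 = r$. The sub-hypothesis condition $h_2 \subseteq h_1$ holds because $r \in h$ and $r \subseteq r$ by Definition \ref{def:satruleALT}, and likewise $r_2 \subseteq r_1$ is trivial. In the reducible case, Proposition \ref{prop:sound_sat3} yields that $h_1 = h$ is not optimal; in the indiscriminate case, Proposition \ref{prop:sound_Indiscriminate} gives the same conclusion. Since the two cases exhaust Definition \ref{def:pointless}, the corollary follows in both branches.

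The main obstacle is the \emph{basic} precondition in both soundness propositions: they require the rule $r_1 = r$ to be basic in $h$, i.e., that the head predicate of $r$ does not appear in a body literal of any rule in $h$. When $h$ is non-recursive in the head of $r$, this holds by inspection and the argument goes through directly. For the recursive case, I would argue that the combination of Definition \ref{def:capturedLiteral} with the entailment conditions in Definitions \ref{def:breducible} and \ref{def:indiscriminate} forces $B$ alone to entail an instance of a literal whose predicate symbol is the target predicate being defined recursively in $h$; under the standard ILP assumption that the target predicate is not defined in $B$, this cannot happen for the recursive literal, so any pointless recursive rule must owe its pointlessness to a non-recursive captured literal, reducing the analysis to the basic case via the same hypothesis-replacement argument used in the proofs of Propositions \ref{prop:sound_sat3} and \ref{prop:sound_Indiscriminate}.

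Once both cases are handled, the cost strict inequality $cost_{E,B}(h_3) < cost_{E,B}(h)$ obtained from the soundness proposition witnesses a hypothesis of strictly smaller cost in $\mathcal{H}$, contradicting optimality of $h$ per Definition \ref{def:opthyp}, completing the proof.
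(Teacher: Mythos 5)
The paper gives no explicit proof of this corollary; it is intended as an immediate consequence of Propositions \ref{prop:sound_sat3} and \ref{prop:sound_Indiscriminate}, and your core instantiation ($h_1 = h$, $h_2 = \{r\}$, $r_1 = r_2 = r$, then case-split on reducible versus indiscriminate) is exactly that intended one-line argument. You are also right to flag the \emph{basic} precondition as the sticking point: Definition \ref{def:pointless} does not require the witnessing rule $r$ to be basic in $h$, whereas both soundness propositions do, so the corollary as stated is strictly stronger than what the propositions deliver. Identifying that mismatch is a genuine contribution of your write-up.

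However, your patch for the non-basic case does not close the gap. Basicness (Definition \ref{def:basic}) is a condition on where the \emph{head} predicate of $r$ occurs in the bodies of rules of $h$; it has nothing to do with which captured literal witnesses reducibility or indiscriminacy. Showing that the redundant captured literal $l$ is a non-recursive BK literal (via the ``target predicate not defined in $B$'' assumption) does not make $r$ basic in $h$, so Propositions \ref{prop:sound_sat3} and \ref{prop:sound_Indiscriminate} still cannot be invoked, and you cannot simply ``reduce to the basic case.'' What would actually be needed is a direct re-proof that replacing $r$ by $r \setminus \{l\}$ preserves the coverage of the \emph{whole program} $h \cup B$ when $r$ participates in a recursive definition --- the delicate point being that the entailment in Definitions \ref{def:breducible} and \ref{def:indiscriminate} is checked against $B$ alone, while body atoms of $r$ may be derivable only via other rules of $h$, so the literal-removal step can strictly generalise the program. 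Your proposal asserts this reduction rather than proving it; either supply that argument or note that the corollary should be read with the basicness side-condition built into the notion of pointlessness (which is how Algorithm \ref{alg:pointless} actually behaves, since it only considers rules that are basic in $h$).
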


% \begin{proposition}[\textbf{Reducible rule soundness}] \label{prop:sound_sat2}
% Let $h$ be a hypothesis with a pointless rule.
% Then $h$, nor any specialisation of it, cannot be an optimal hypothesis.
% \end{proposition}

\noindent
In Section \ref{sec:algo}, we introduce \name{}, which identifies pointless rules and prunes them from the hypothesis space.

\section{Algorithm}
\label{sec:algo}

We now describe \name{} (Algorithm \ref{alg:explainer}), which builds on \popper{} by identifying pointless rules and using them to prune the hypothesis space.
\name{} takes as input background knowledge (\emph{bk}), positive (\emph{pos}) and negative (\emph{neg}) training examples, and a maximum hypothesis size (\emph{max\_size}).
\name{} follows \popper{} and uses a generate, test, and constrain loop to find an optimal hypothesis (Definition \ref{def:opthyp}).
\name{} starts with an answer set program $\mathcal{P}$.
The (stable) models of $\mathcal{P}$ correspond to hypotheses (definite programs) and represent the hypothesis space.
In other words, $\mathcal{P}$ encodes in ASP the search for a syntactically valid hypothesis that satisfies given constraints.
In the generate stage (line 6), \name{} uses an answer set programming (ASP) solver to find a model of $\mathcal{P}$.
If there is no model, \name{} increments the hypothesis size and loops again (lines 7-9).
If there is a model, \name{} converts it to a hypothesis $h$.
In the test stage, it uses Prolog to test $h$ on the training examples (line 10).
If $h$ is better (according to a cost function) than the previous best-seen hypothesis, \name{} sets the best hypothesis to $h$ (line 12).
In the constrain stage, \name{} builds hypothesis constraints (represented as ASP constraints) from $h$.
This step is identical to \popper{}.
For instance, if $h$ does not entail any positive examples, \name{} builds a specialisation constraint to prune its specialisations.
\name{} saves these constraints (line 13) which it passes to the generate stage to add to $\mathcal{P}$ to prune models and thus prune the hypothesis space.

\begin{algorithm}[ht!]
\footnotesize
{
\begin{myalgorithm}[]
def reducer(bk, pos, neg, max_size):
  cons = {}
  size = 1
  best_h, best_score = none, $\infty$
  while size $\leq$ max_size:
    h = generate(cons, size)
    if h == UNSAT:
      size += 1
      continue
    h_score = test(bk, pos, neg, h)
    if h_score < best_score:
        best_h, best_score = h, h_score
    cons += build_cons(h, h_score)
    if pointless(h, neg, bk):
      cons += build_spec_basic_con(h)
      cons += build_gen_basic_con(h)
  return best_h
\end{myalgorithm}
\caption{
\name{}.
}
\label{alg:explainer}
}
\end{algorithm}

The novelty of \name{} is checking whether a hypothesis has a pointless rule (line 14) by calling Algorithm \ref{alg:pointless}.
We describe Algorithm \ref{alg:pointless} below.
If a hypothesis has a pointless rule, \name{} builds constraints to prune its generalisations and specialisations, in which the pointless rule is basic.
\name{} passes these constraints to the generate stage to add to $\mathcal{P}$ to prune models and thus prune the hypothesis space of other hypotheses with pointless rules.

\name{} continues its loop until it exhausts the models of $\mathcal{P}$ or reaches a timeout, at which point it returns the best-seen hypothesis.

\begin{algorithm}[ht!]
\footnotesize
{
\begin{myalgorithm}[]  
def pointless(h, neg, bk):
  for rule in h:
    if not basic(rule, h):
      continue
    head, body = rule
    for literal in body:
      $\mbox{body}'$ = body-literal
      if not captured(head, $\mbox{body}'$, literal):
        continue       
      if reducible(bk, $\mbox{body}'$, literal):
        return true
      if indiscriminate(bk, neg, rule, head, $\mbox{body}'$):
        return true
  return false

def reducible(bk, neg,  $\mbox{body}'$, literal):
  $\mbox{rule}'$ = ($\bot$, $\mbox{body}'$ $\cup$ $\{ \neg 
 \mbox{literal}\}$)
  return unsat(bk, $\mbox{rule}'$)
  
def indiscriminate(bk, neg, rule, head, $\mbox{body}'$):
  $\mbox{rule}'$ = (head, $\mbox{body}'$)
  s1 = neg_covered(bk, neg, rule)
  s2 = neg_covered(bk, neg, $\mbox{rule}'$)
  return s1 == s2
\end{myalgorithm}
\caption{
Finding pointless rules.
}
\label{alg:pointless}
}
\end{algorithm}

\subsection*{Pointless Rules}
Algorithm~\ref{alg:pointless} checks whether a hypothesis has a rule which 
(i) is basic\footnote{
We enforce other syntactic restrictions, such as forcing literals in a rule to be connected, where they cannot be partitioned into two sets such that the variables in the literals of one set are disjoint from the variables in the literals of the other set.}  (Definition \ref{def:basic}), 
(ii) has a captured literal (Definition \ref{def:capturedLiteral}, line 8), 
and (iii) is reducible (Definition~\ref{def:breducible}, line 10) or indiscriminate (Definition~\ref{def:indiscriminate}, line 12).  
Algorithm \ref{alg:pointless} includes two subprocedures for checking if a rule is reducible or indiscriminate. 
The first procedure (lines 16-18) checks if the query $\mbox{body}'\cup \{ \neg 
 \mbox{literal}\}$  is unsatisfiable over the background knowledge. 
 This query determines reducibility because it checks whether a literal is implied by the body. 
 The second subprocedure (lines 20-24) checks if the subrule without the captured literal entails the same negative examples as the original rule. 
 This query matches Definition~\ref{def:indiscriminate}.
We use Prolog to perform the unsat (line 18) and coverage (lines 22-23) checks.
\subsubsection{Correctness}

We show that \name{} is correct:

\begin{theorem}[\textbf{\name{} correctness}]
\label{thm:optcorrect}
\name{} returns an optimal hypothesis if one exists.
\end{theorem}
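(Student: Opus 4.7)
The plan is to reduce the correctness of \name{} to the known correctness of \popper{}'s generate-test-constrain loop together with the soundness of the two new pruning rules added at lines 15--16. Since \name{} only \emph{adds} constraints on top of \popper{}, it suffices to show that every hypothesis pruned by these constraints is provably non-optimal; the optimal hypothesis, being a model of the remaining search space, will then be enumerated, tested, and retained as \texttt{best\_h} because the cost function is deterministic, the bounded hypothesis space is finite, and the outer loop monotonically enlarges \texttt{size} until the space is exhausted.

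To establish soundness of the new constraints, I would first verify that Algorithm \ref{alg:pointless} faithfully implements Definitions \ref{def:capturedLiteral}, \ref{def:breducible}, and \ref{def:indiscriminate}. The capture check at line 8 is a direct variable-inclusion test, the unsat call at line 18 witnesses $B \models (body(r) \setminus \{l\}) \rightarrow l$, and the coverage-equality test at line 24 encodes the negative-example-wise entailment required for indiscrimination. Given this, whenever \texttt{pointless}$(h,\text{neg},\text{bk})$ returns true, $h$ genuinely contains a basic rule that is reducible or indiscriminate, so $h$ itself is non-optimal by Corollary \ref{col:1}.

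Next, I would argue that the two constraint builders prune exactly the set of hypotheses $h_1$ for which there exists a sub-hypothesis $h_2 \subseteq h_1$ containing the detected pointless rule as a sub-rule of some rule $r_1 \in h_1$ that is basic in $h_1$. Propositions \ref{prop:sound_sat3} and \ref{prop:sound_Indiscriminate} then directly yield non-optimality of every such $h_1$, which is precisely what soundness requires. Combining this with the fact that every optimal hypothesis survives all pruning (since none of the applied constraints can witness non-optimality of an optimum) gives the theorem.

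The main obstacle, I expect, will be aligning the ASP encoding of \texttt{build\_spec\_basic\_con} and \texttt{build\_gen\_basic\_con} with the semantic quantifier structure of Propositions \ref{prop:sound_sat3} and \ref{prop:sound_Indiscriminate}: one must verify that the basic-ness hypothesis from Definition \ref{def:basic} is tracked by the constraint rather than silently dropped, since both propositions rely crucially on the pointless rule not participating in a recursive predicate definition in the larger hypothesis. A secondary concern is that the subrule relation forbids variable renaming, so the constraint must respect the exact variable naming used in the generate-stage encoding; any mismatch here would either weaken pruning (harmless for correctness) or, more worryingly, accidentally prune an optimum (which must be ruled out).
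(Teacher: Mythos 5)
Your proposal is correct and follows essentially the same route as the paper: reduce to \popper{}'s correctness theorem for optimally sound constraints, then argue that the new pruning never eliminates an optimum via Corollary \ref{col:1} (with Propositions \ref{prop:sound_sat3} and \ref{prop:sound_Indiscriminate} covering the pruned specialisations and generalisations). The paper's own proof is just a terser version of this argument, leaving the algorithm--definition correspondence and the constraint-encoding concerns you raise implicit.
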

\begin{proof}
\citet{popper} show that given optimally sound constraints (an optimally sound constraint never prunes an optimal hypothesis), \popper{} returns an optimal hypothesis if one exists (Theorem 1).
\name{} builds on \popper{} by pruning pointless (one with a reducible or indiscriminate rule) hypotheses.
By Corollary \ref{col:1}, a pointless hypothesis is not optimal.
Therefore, \name{} never prunes an optimal hypothesis so returns one if it exists.
\end{proof}

\section{Experiments}
We claim that pruning pointless rules enables sound and efficient pruning of the hypothesis space, thereby improving learning performance.
To test this claim, our experiments aim to answer the question:
\begin{enumerate}
\item[\textbf{Q1}] Can pruning pointless rules reduce learning times whilst maintaining predictive accuracies?
\end{enumerate}

\noindent
To answer \textbf{Q1}, we compare the performance of \name{} against \popper{}.
As \name{} builds on \popper{}, the only experimental difference between the systems is the ability to identify pointless rules and build constraints from them to prune the hypothesis space.
Therefore, this comparison directly tests our claim.

Checking whether a rule is reducible or indiscriminate requires testing hypotheses on the BK and examples and thus incurs an overhead cost.
To understand the cost of pruning pointless rules, our experiments aim to answer the question:

\begin{enumerate}
\item[\textbf{Q2}] What is the overhead of pruning pointless rules?
\end{enumerate}

\noindent
\textbf{Q1} explores whether the combination of both types of pointless rules, reducible (Definition \ref{def:breducible}) and indiscriminate (Definition \ref{def:indiscriminate}), can improve learning performance.
To understand the impact of each type of rule, our experiments aim to answer the question:

\begin{enumerate}
\item[\textbf{Q3}] Can pruning reducible or indiscriminate rules alone reduce learning times?
\end{enumerate}
To answer \textbf{Q3}, we compare the performance of \name{} against \popper{} but where \name{} identifies only reducible rules or only indiscriminate rules, but not both.

Comparing \name{} against other systems besides \popper{} will not allow us to evaluate the idea of pruning pointless rules because it will not allow us to identify the source of empirical gains.
However, many people expect comparisons against other systems.
Therefore, we ask the question:
\begin{description}
\item[Q4] How does \name{} compare to other approaches?
\end{description}
\noindent
To answer \textbf{Q4}, we compare \name{} against \popper{}, \ale{} \cite{aleph}, and \aspsynth{} \cite{aspsynth}.
% This results are mostly meaningless though
% \footnote{We alo tried rule selection approaches (Section \ref{sec:related}) but precomputing every possible rule is infeasible on our datasets. 
% }
% We use these systems because they can learn definite programs with function symbols and recursive programs.
% , \metagol{} \cite{mugg:metagold}, \popper{}

% \textbf{Satellite.}
% The task is to learn diagnostic rules for battery faults in the power subsystem of a satellite, which consists of 40 components and 29 sensors \cite{satellite}.

\subsubsection{Setup}
Each task contains training and testing examples and background knowledge.
We use the training examples to train the ILP system to learn a hypothesis.
We test a hypothesis on the testing examples.
Given a hypothesis $h$, background knowledge $B$, and a set of examples, 
a \emph{true positive} (tp) is a positive example entailed by $h \cup B$, 
a \emph{true negative} (tn) is a negative example not entailed by $h \cup B$, 
a \emph{false positive} (fp) is a negative example entailed by $h \cup B$, 
and a \emph{false negative} (fn) is a positive example not entailed by $h \cup B$.
% not entailed by $h \cup B$.
% We denote the number of true positives and true negatives as $tp(h)$ and $tn(h)$ respectively. 
We measure predictive accuracy as \emph{balanced accuracy}:
$\frac{1}{2} \left( \frac{tp(h)}{tp(h) + fn(h)} + \frac{tn(h)}{tn(h) + fp(h)} \right)$.
% \ac{fix - it is balanced accuracy}
In all experiments, we measure predictive accuracy, termination times, and the time taken to discover pointless rules, i.e. the overhead of our approach. 
    For \textbf{Q1} and \textbf{Q2} we use a timeout of 60 minutes per task.
For \textbf{Q3}, we use a timeout of 10 minutes per task.
We repeat each experiment 10 times.
% \ac{We repeat the experiment thrice.}
We plot and report 95\% confidence intervals (CI).
We compute 95\% CI via bootstrapping when data is non-normal.
To determine statistical significance, we apply either a paired t-test or the Wilcoxon signed-rank test, depending on whether the differences are normally distributed. 
We use the Benjamini–Hochberg procedure to correct for multiple comparisons.
% We use a paired t-test or a Wilcoxon Signed-Rank Test (depending on whether the differences are normally distributed) to determine the statistical significance of any differences in the results.
% In other words, any subsequent reference to a significance test refers to one of these two tests.
All the systems use similar biases.
However, as with other rule selection approaches (Section \ref{sec:related}), \aspsynth{} precomputes all possible rules of a certain size and then uses an ASP solver to find a subset.
It is infeasible to precompute all possible rules, so we set the maximum rule size to 4.
% For \ale{}, we use it with its default settings, but there will likely be different settings that improve the performance of \ale{}.
% \ac{We use a 3.8 GHz 8-Core Intel Core i7 and a single CPU to run the experiments.}
We use an AWS m6a.16xlarge instance to run experiments where each learning task uses a single core.

\paragraph{Reproducibility.}
The code and experimental data for reproducing the experiments are available as supplementary material and will be made publicly available if the paper is accepted for publication.    

\subsubsection{Domains}
% We use the following domains.
We use 449 tasks from several domains:

\textbf{1D-ARC.} This dataset \cite{onedarc} contains visual reasoning tasks inspired by the abstract reasoning corpus \cite{arc}.

% \textbf{Alzheimer.} These real-world tasks 
% involve learning rules describing four properties desirable for drug design against Alzheimer’s disease \cite{DBLP:journals/ngc/KingSS95}.

\textbf{IGGP.} In inductive general game playing (IGGP) \cite{iggp}, the task is to induce rules from game traces from the general game playing competition \cite{ggp}.

\textbf{IMDB.}
We use a real-world dataset which contains relations between movies, actors, and directors \cite{imdb}. 

\textbf{List functions.} 
The goal of each task in this dataset is to identify a function that maps input lists to output lists, where list elements are natural numbers \cite{ruleefficient}.

\textbf{Trains.}
The goal is to find a hypothesis that distinguishes east and west trains \cite{michalski:trains}.

\textbf{Zendo.}
Zendo is a multiplayer game where players must discover a secret rule by building structures.

% \begin{figure}[t]
% \small
% \centering
% \begin{tabular}{cc}
% Input & Output\\
% \includegraphics[width=3.9cm]{} & \includegraphics[width=3.9cm]{}\\
% \end{tabular}
% \caption{The \emph{mirror} task from \emph{1D-ARC}. 
% }
% \label{fig:mirrorexs}
% \end{figure}

% are provided as supplementary material and 
\subsection{Results}

\subsubsection{Q1. Can pruning Pointless Rules Reduce Learning Times Whilst Maintaining Predictive Accuracies?}
Figure \ref{fig:q1_times} shows the difference in learning times of \name{} vs \popper{}.
In other words, Figure \ref{fig:q1_times} shows the reduction in learning times by ignoring pointless rules.
Figure \ref{fig:q1_times} shows that \name{} consistently and drastically reduces learning times.
Significance tests confirm $(p < 0.05)$ that \name{} reduces learning times on 96/449 (21\%) tasks and increases learning times on 4/449 (1\%) tasks.
There is no significant difference in the other tasks.
The mean decrease in learning time is $28 \pm 4$ minutes and the median is 24 minutes with 95\% CI between 11 and 43 minutes.
The mean increase in learning time is $8 \pm 16$ minutes and the median is 0 minutes with 95\% CI between 0 and 34 minutes.
These are minimum improvements because \popper{} often times out after 60 minutes.
With a longer timeout, we would likely see greater improvements.

\begin{figure}[h!]
\centering
    \includegraphics[scale=1.1]{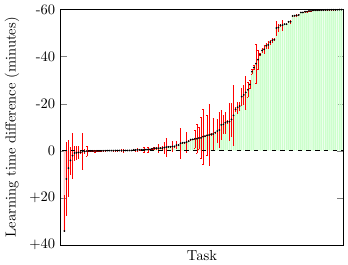}    
\caption{Learning time improvement of \name{} over \popper{}.
Values above the zero line indicate that \name{} reduces learning times.
The tasks are ordered by the learning time improvement.
For legibility, we only show tasks where the learning times differ by more than 1 second.
}
\label{fig:q1_times}
\end{figure}

Figure \ref{fig:q1_accs} shows the difference in predictive accuracy of \name{} vs \popper{}.
Significance tests confirm $(p < 0.05)$ that \name{} increases accuracy on 6/449 (1\%) tasks.
It decreases accuracy on 5/449 (1\%) tasks.
There is no significant difference in the other tasks.
The slight accuracy discrepancy is because multiple optimal hypotheses may exist. 
The two systems may find different ones. Two hypotheses with the same training cost might have different test accuracy. 
% The results show that \name{} can drastically reduce learning times whilst retaining the ability to generalise.

% In other words, Figure \ref{fig:q1_accs} shows the predictive accuracy improvement when ignoring pointless rules.
% \ac{FIX}
% The results show that \name{} can drastically improve predictive accuracies. 
% Although both systems are guaranteed to learn optimal hypotheses, they are not guaranteed to find them given the time limit.
% The reason \name{} often improves accuracy is that it finds a good hypothesis quicker than \popper{}.

\begin{figure}[h!]
\centering
\includegraphics[scale=0.95]{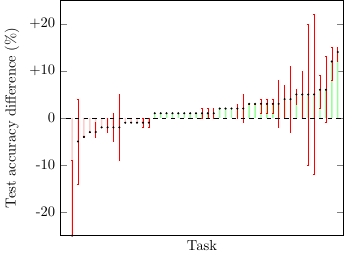}
\caption{Predictive accuracy improvement of \name{} compared to \popper{}, i.e. when ignoring pointless rules.
% Values above the zero line indicate that \name{} improves predictive accuracy.
}
\label{fig:q1_accs}
\end{figure}

% Some of the improvements are substantial.
% \ac{For instance, for the \emph{sokoban-terminal} task, symmetry breaking reduces the solving time from $1075 \pm 112$ seconds to $59 \pm 6$ seconds.}

% \name{} can drastically improve learning performance.
% For instance, consider the \emph{IGGP} task \emph{eight\_puzzle-legal\_move}, where the goal is to learn the legal moves for the eight puzzle game.
% % \ac{describe the task}
% For this task, \name{} learns a hypothesis with 100\% predictive accuracy and terminates in $12 \pm 0$ seconds.
% By contrast, \popper{} timeouts after $60 \pm 0$ minutes.

% In other words, on this task, ignoring pointless rules reduces learning times by at least 97\% and improves predictive accuracy by 40\%.
% A reducible rule that \name{} finds on this task is:

% \begin{center}
% \begin{tabular}{l}
% \emph{out(E,A,B) $\leftarrow$ succ(A,B), lt(C,A), lt(C,B)}\\
% % \emph{h $\leftarrow$ add(A,V6,V1), add(V3,A,V1), add(V1,V3,A)}\\
% % \emph{h $\leftarrow$ lt(V4,V1), in(V3,V2), lt(V3,V1), c0(V4)}\\
% \end{tabular}
% \end{center}

% \noindent
% This rule is reducible because if $B$ is the successor $A$ and $C$ is smaller than $A$ then $C$ must be smaller than $B$.
% When \name{} identifies that this rule is pointless, it builds a constraint to prune all specialisations (supersets) from the hypothesis space. 
% \ac{discuss these pointless rules}

% \begin{center}
% \begin{tabular}{l}
% \ac{reducible 1}
% \end{tabular}
% \end{center}

% \begin{center}
% \begin{tabular}{l}
% \ac{reducible 1}
% \end{tabular}
% \end{center}

\name{} can drastically improve learning performance.
For instance, in the IGGP dataset, one of the tasks is to learn a set of rules to describe a legal move in the \emph{eight puzzle} game.
For this task, both \name{} and \popper{} learn hypotheses with 100\% accuracy.
However, whereas \popper{} does not terminate (prove optimality) within 60 minutes, \name{} terminates (proves optimality) after only $12 \pm 0$ seconds, a 99\% improvement.
A reducible rule that \name{} finds is:
\begin{center}
\begin{tabular}{l}
\emph{legal\_move(A,B,C,D) $\leftarrow$ succ(D,E), pos1(D), pos2(E)}
\end{tabular}
\end{center}

\noindent
This rule is reducible because the \emph{pos$_i$} relations denote positions in the game board, where 
\emph{pos$_i$} precedes \emph{pos$_{i+1}$}.
Therefore, if $pos_1(D)$ is true and $E$ is the successor of $D$ then $pos_2(E)$ must be true, and this literal is therefore redundant.

Three indiscriminate rules that \name{} finds are:
\begin{center}
\begin{tabular}{l}
\emph{legal\_move(A,B,C,D) $\leftarrow$ role(B)}\\
\emph{legal\_move(A,B,C,D) $\leftarrow$ index(C)}\\
\emph{legal\_move(A,B,C,D) $\leftarrow$ index(D)}\\
\end{tabular}
\end{center}

\noindent
These rules are indiscriminate because \emph{role(B)}, \emph{index(C)}, and \emph{index(D)} are true for every negative example and thus are redundant in the rule.

% There is one noticeable exception where \name{} is substantially worse than \popper{} ($24 \pm 15$ minutes vs $58 \pm 2$ minutes), shown in the leftmost bar in Figure \ref{fig:q1_times}.
% This task is \emph{pilgrimage-legal\_place\_pilgrim} from the IGGP dataset, where the overhead of finding pointless rules is high.~\dc{Such task are rare among those we empirically evaluated} (see next section for more details). 
% \ac{need to check again over several runs}

Overall, these results suggest that the answer to \textbf{Q1} is yes, pruning pointless can drastically improve learning times whilst maintaining high predictive accuracies.

\subsubsection{Q2. What Is the Overhead of Finding Pointless Rules?}

Figure \ref{fig:overhead} shows the ratio of learning time spent finding pointless rules.
The mean overhead is 2\%. 
The maximum is 80\%.
The overhead is less than 10\% on 85\% of the tasks.
Overall, these results suggest that the answer to \textbf{Q2} is that the overhead of pruning pointless is typically small ($<$10\%).

\begin{figure}[h!]
\centering
\includegraphics[scale=1.16]{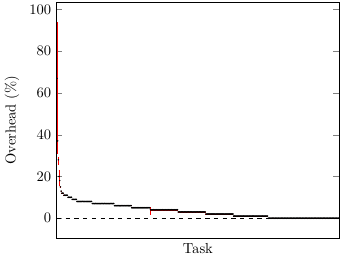}    
\caption{Overhead of finding pointless rules.}
\label{fig:overhead}
\end{figure}

\subsubsection{Q3. Can pruning Reducible or Indiscriminate Rules Alone Reduce Learning Times?}
Figure \ref{fig:q3} shows how the learning change depending on whether \name{} prunes only reducible or only indiscriminate rules.
Significance tests confirm $(p < 0.05)$ that pruning only reducible rules reduces learning times on 93/449 (21\%) tasks and increases learning times on 1/449 (0\%) tasks.
There is no significant difference in the other tasks.
The mean decrease in learning time is $26 \pm 5$ minutes and the median is 17 minutes with 95\% CI between 7 and 34 minutes.
The mean and median increase is 1 minute. 
Pruning only indiscriminate rules reduces learning times on 101/449 (22\%) tasks and increases learning times on 5/449 (1\%) tasks.
There is no significant difference in the other tasks.
The mean decrease in learning time is $28 \pm 4$ minutes and the median is 22 minutes with 95\% CI between 8 and 42 minutes.
The mean increase is $6 \pm 11$ minutes and the median increase is 0 minutes with 95\% CI between 0 and 28 minutes.
Overall, these results suggest that the answer to \textbf{Q3} is that both types of pointless rules drastically reduce learning times.

\begin{figure}[h!]
\centering
    \includegraphics[scale=0.95]{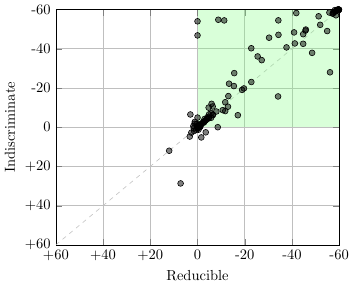}    
\caption{
Learning time improvement (minutes) when pruning reducible and indiscriminate rules.
Values in the upper right quadrant indicate that pruning either reducible rules or indiscriminate rules alone reduces learning time.
% In other words, this plot shows the learning time improvement when ignoring pointless rules.
% The tasks are ordered by the learning time improvement.
% For legibility, we only show tasks where the learning times differ by more than 1 second.
}
\label{fig:q3}
\end{figure}

\subsubsection{Q4. How Does \name{} Compare to Other Approaches?}
Table \ref{tab:agg_acc} shows the predictive accuracies aggregated per domain of all the systems.
\name{} has higher accuracy than \ale{} and \aspsynth{} on every domain.
\name{} has higher or equal accuracy than \popper{} on every domain.
Overall, these results suggest that the answer to \textbf{Q4} is that \name{} compares favourably to existing approaches in terms of predictive accuracy.

\begin{table}[ht]
\small
\centering
\begin{tabular}{@{}l|cccc@{}}
\textbf{Task} & \textbf{\ale{}} & \textbf{\aspsynth{}} & \textbf{\popper{}} & \textbf{\name{}}\\
\midrule
\emph{1d} & 54 $\pm$ 4 & 89 $\pm$ 3 & 92 $\pm$ 1 & 92 $\pm$ 1\\
\emph{alzheimer} & 51 $\pm$ 1 & 57 $\pm$ 2 & 75 $\pm$ 1 & 75 $\pm$ 1\\
\emph{iggp} & 57 $\pm$ 5 & 68 $\pm$ 0 & 85 $\pm$ 0 & 85 $\pm$ 0\\
\emph{imdb} & 50 $\pm$ 0 & 98 $\pm$ 0 & 100 $\pm$ 0 & 100 $\pm$ 0\\
\emph{jr} & 64 $\pm$ 11 & 83 $\pm$ 2 & 96 $\pm$ 0 & 97 $\pm$ 0\\
\emph{trains} & 50 $\pm$ 0 & 76 $\pm$ 2 & 87 $\pm$ 9 & 94 $\pm$ 7\\
\emph{zendo} & 53 $\pm$ 6 & 82 $\pm$ 0 & 84 $\pm$ 3 & 84 $\pm$ 3\\
\end{tabular}
\caption{
Aggregated predictive accuracies (\%).
% with a 600s timeout.
}
\label{tab:agg_acc}
\end{table}
% \dc{purely to show things are not damage by reducer. Point of predicative accuracy.}

% The results in Table \ref{tab:qaccs} show that \name{} has equal or higher predictive accuracy than \disco{} on all the domains. 
% Table \ref{tab:sota_times} shows the learning times aggregated per domain for \name{} and \disco{}.
% The results show that \name{} outperforms \disco{} in 6/8 domains and that they are matched in the other two domains.
% A paired t-test confirms the significance of the difference ($p < 0.01$). 
% \name{} can reduce learning times by 97\% compared to \disco{}, such as on the \emph{sql} domain.
% Overall, these results suggest that the answer to \textbf{Q3} is that \name{} compares favourably to existing approaches, both in terms of predictive accuracies and learning times.
\section{Conclusions and Limitations}

We have introduced an approach that identifies and ignores pointless (reducible and indiscriminate) rules.
We have shown that ignoring pointless rules is optimally sound (Propositions \ref{prop:sound_sat3} and \ref{prop:sound_Indiscriminate}).
We implemented our approach in \name{}, which identifies pointless rules in hypotheses and builds constraints from them to prune the hypothesis space.
We have proven that \name{} always learns an optimal hypothesis if one exists (Theorem \ref{thm:optcorrect}).
We have experimentally shown on multiple domains, including visual reasoning and game playing, that our approach can reduce learning times by 99\% whilst maintaining high predictive accuracies.
% Furthermore, a key contribution of this paper is demonstrating that a conceptually straightforward yet previously unexplored idea (identifying pointless rules in hypotheses) can lead to substantial performance gains. 
% This finding underscores the power of simplicity when thoughtfully applied to complex problems.

% Moreover, a key contribution of this paper is demonstrating that a simple idea - identifying pointless rules in hypotheses - 

% Moreover, we think an important point of this paper is the demonstration that a simple idea (identifying pointless rules in hypotheses) lead to tremendous performance gains.
% the main significance of 
% Our method directly addresses a well-known bottleneck in ILP systems: the explosion of the hypothesis space.

\paragraph{Limitations.}
% \textbf{Other systems.} 
The benefits of ignoring pointless rules should generalise to other ILP approaches.
For instance, the improvements should directly improve \textsc{hopper} \cite{hopper}, which learns higher-order programs, and \textsc{propper} \cite{propper}, which uses neurosymbolic inference to learn programs from probabilistic data.
Future work should empirically show how our idea benefits these systems.

% \textbf{Completeness/More pointless programs.} 
% \subsubsection{Code, Data, and Appendices}
% A longer version of this paper with the appendices is available at https://arxiv.org/pdf/2502.01232.
% The experimental code and data are available at https://github.com/logicand-learning-lab/aaai26-implications.

% \subsubsection{Acknowledgments}
\section*{Acknowledgments}
Andrew Cropper was supported by his EPSRC fellowship (EP/V040340/1). David M. Cerna was supported by the
Czech Science Foundation Grant 22-06414L and Cost Action
CA20111 EuroProofNet. 

\bibliography{ijcai24}
\newpage

\section{Appendix}
\label{sec:terminology}
\subsection{Logic Programming}
We assume familiarity with logic programming \cite{lloyd:book} but restate some key relevant notation. A \emph{variable} is a string of characters starting with an uppercase letter. A \emph{predicate} symbol is a string of characters starting with a lowercase letter. The \emph{arity} $n$ of a function or predicate symbol is the number of arguments it takes. An \emph{atom} is a tuple $p(t_1, ..., t_n)$, where $p$ is a predicate of arity $n$ and $t_1$, ..., $t_n$ are terms, either variables or constants. An atom is \emph{ground} if it contains no variables. A \emph{literal} is an atom or the negation of an atom. A \emph{clause} is a set of literals.
A \emph{clausal theory} is a set of clauses. A \emph{constraint} is a clause without a non-negated literal. A \emph{definite} rule is a clause with exactly one non-negated literal. A \emph{program} is a set of definite rules. A \emph{substitution} $\theta = \{v_1 / t_1, ..., v_n/t_n \}$ is the simultaneous replacement of each variable $v_i$ by its corresponding term $t_i$. 
A rule $c_1$ \emph{subsumes} a rule $c_2$ if and only if there exists a substitution $\theta$ such that $c_1 \theta \subseteq c_2$. 
A program $h_1$ subsumes a program $h_2$, denoted $h_1 \preceq h_2$, if and only if $\forall c_2 \in h_2, \exists c_1 \in h_1$ such that $c_1$ subsumes $c_2$. A program $h_1$ is a \emph{specialisation} of a program $h_2$ if and only if $h_2 \preceq h_1$. A program $h_1$ is a \emph{generalisation} of a program $h_2$ if and only if $h_1 \preceq h_2$.
\subsection{Answer Set Programming}
We also assume familiarity with answer set programming \cite{asp} but restate some key relevant notation \cite{ilasp}.
A \emph{literal} can be either an atom $p$ or its \emph{default negation} $\text{not } p$ (often called \emph{negation by failure}). A normal rule is of the form $h \leftarrow b_1, ..., b_n, \text{not } c_1,... \text{not } c_m$. where $h$ is the \emph{head} of the rule, $b_1, ..., b_n, \text{not } c_1,... \text{not } c_m$ (collectively) is the \emph{body} of the rule, and all $h$, $b_i$, and $c_j$ are atoms. A \emph{constraint} is of the form $\leftarrow b_1, ..., b_n, \text{not } c_1,... \text{not } c_m.$ where the empty head means false. A \emph{choice rule} is an expression of the form $l\{h_1,...,h_m\}u \leftarrow b_1,...,b_n, \text{not } c_1,... \text{not } c_m$ where the head $l\{h_1,...,h_m\}u$ is called an \emph{aggregate}. In an aggregate, $l$ and $u$ are integers and $h_i$, for $1 \leq i \leq m$, are atoms. An \emph{answer set program} $P$ is a finite set of normal rules, constraints, and choice rules. Given an answer set program $P$, the \emph{Herbrand base} of $P$, denoted
as ${HB}_P$, is the set of all ground (variable free) atoms that can be formed from the predicates and constants that appear in $P$. When $P$ includes only normal rules, a set $A \in {HB}_P$ is an \emph{answer set} of $P$ iff it is the minimal model of the  \emph{reduct} $P^A$, which is the program constructed from the grounding of $P$ by first removing any rule whose body contains a literal $\text{not } c_i$ where $c_i \in A$, and then removing any defaultly negated literals in the remaining rules. An answer set $A$ satisfies a ground constraint $\leftarrow b_1, ..., b_n, \text{not } c_1,... \text{not } c_m.$ if it is not the case that $\{b_1, ..., b_n\} \in A$ and $A \cap \{c_1, ..., c_m\} = \emptyset$.

\end{document}